\numberwithin{equation}{section}
\newtheorem{teo}{Theorem}[section]
\newtheorem{theorem}{Theorem}[section]
\newtheorem{proposition}{Proposition}[section]
\newtheorem{lemma}{Lemma}[section]
\theoremstyle{definition}
\theoremstyle{remark}
\newtheorem{remark}[teo]{Remark}
\newcounter{q}
 \newtheorem{example}[q]{Example}
\begin{document}
\bibliographystyle{amsplain}
\author[B. Weinkove]{Ben Weinkove}
\address{Department of Mathematics, Northwestern University, 2033 Sheridan Road, Evanston, IL 60208, USA. \\ email: weinkove@math.northwestern.edu}
\thanks{Research supported in part by NSF grant DMS-2348846 and the Simons Foundation.}

\title{Stochastic neighborhood embedding \\ and the gradient flow of relative entropy}

\maketitle

\vspace{-20pt}

\begin{abstract}
Dimension reduction, widely used in science,  maps high-dimensional data into low-dimen\-sion\-al space.  We investigate a basic mathematical model underlying the techniques of stochastic neighborhood embedding (SNE) and its popular variant t-SNE.  Distances between points in high dimensions are used to define a probability distribution on pairs of points, measuring how similar the points are.  The aim is to map these points to low dimensions in an optimal way so that similar points are closer together.  This is carried out by minimizing the relative entropy between two probability distributions.  

We consider the gradient flow of the relative entropy and analyze its long-time behavior.  This is a self-contained mathematical problem about the behavior of a system of nonlinear ordinary differential equations.   We find optimal bounds for the diameter of the evolving sets as time tends to infinity.  In particular,  the diameter may blow up for the t-SNE version, but remains bounded for SNE.

\end{abstract}

\section{Introduction}

Dimension reduction (or dimensionality reduction) refers to a method of representing high-dimensional data in low-dimensional space.  The goal is to retain the essential features of the dataset, such as clustering, in the low-dimensional space (see \cite{Gbook}, for example). This paper investigates a basic mathematical model underlying two particular dimension reduction techniques:  \emph{stochastic neighborhood embedding} (SNE) and the more widely-used variant called  \emph{t-distributed stochastic neighborhood embedding} (t-SNE).  They were introduced by  Hinton and Roweis in 2002 \cite{HR} and van der Maaten and Hinton in 2008 \cite{vH}, respectively.  These techniques have been used extensively in science, including in medical research, see for example \cite{HPH, KB, LCYH}.  Given these applications, it is important to understand rigorously their mathematical foundations.  Conversely, we believe these methods may reveal some mathematical phenomena of interest in their own right.
 We  describe briefly these methods, pose some questions, and then discuss our results.
 
 \subsection{The SNE and t-SNE algorithms}
Let $x_1, \ldots, x_n$ be  $n$ points in $\mathbb{R}^d$.  Dimension reduction should provide  a map $x_i \mapsto y_i$ to  $y_1, \ldots, y_n$ in $\mathbb{R}^s$.  Since we wish to reduce the dimension of the space, we assume $s$ is strictly smaller than $d$.  We also assume that $n > s+1$, otherwise one could find an $s$-plane in $\mathbb{R}^d$ containing all the points.  (In practical applications, usually $n$ and $d$ are very large and $s=2$ or $3$).

Write $\mathcal{P}_n$ for the discrete set of $n \choose 2$ pairs representing the directed edges between $n$ points,
$$\mathcal{P}_n =\{ (i,j) \ | \   i,j=1, \ldots, n, \ i \neq j \}.$$
The informal outline of the method has three steps, each of which requires some choices.
\begin{enumerate}
\item Define a discrete probability distribution $(p_{ij})_{i\neq j}$ on $\mathcal{P}_n$ representing the ``similarity''  of the points $x_1, \ldots, x_n$ in $\mathbb{R}^d$.  We insist on the symmetry $p_{ij}=p_{ji}$, for simplicity.  The larger the probability $p_{ij}$, the ``more similar'' are the points $x_i$ and $x_j$.
\item Given any points $y_1, \ldots, y_n$ in $\mathbb{R}^s$, we define a discrete probability distribution $(q_{ij})_{i\neq j}$ on $\mathcal{P}_n$ with $q_{ij}=q_{ji}$.  The larger the probability $q_{ij}$, the closer the points $y_i$ and $y_j$. 
\item We look for points $y_1, \ldots, y_n$ so that the probability distributions $(p_{ij})$ and $(q_{ij})$ are as close together as possible, by minimizing a cost function.
\end{enumerate}

We describe now the steps more precisely.  For (1), define, for positive constants $\sigma_1, \ldots, \sigma_n$, 
$$p_{j|i} = \frac{\exp(-|x_i-x_j|^2/(2\sigma_i^2))}{\sum_{k \, | \, k \neq i}  \exp(-|x_i-x_k|^2/(2\sigma_i^2))}\quad \textrm{for } i \neq j.$$
This represents ``the conditional probability that $x_i$ would pick $x_j$ as its neighbor if neighbors were picked in proportion to their probability density under a Gaussian centered at $x_i$''  \cite{vH}, where $\sigma_i$ is the variance of the Gaussian.  The numbers $\sigma_i$ are selected according to certain criteria; since this is not the focus of the paper refer the reader to \cite{vH}.  A word about summation notation is called for.  If $i$ is fixed, we use $\sum_{k \, | \, k \neq i}$ to denote the sum over all $k=1, \ldots, n$ with $k$ not equal to $i$.  We will use $\sum_{k \neq \ell}$  to denote a double summation over all $k$ and all $\ell$ from $1$ to $n$ with $k\neq \ell$.

We define 
$$p_{ij} = \frac{p_{i|j} + p_{j|i}}{2n},$$
and one can check that this is a probability distribution on $\mathcal{P}_n$.


For step (2), for a smooth function $\beta :[0,\infty) \rightarrow (0,\infty)$, define
\begin{equation} \label{qijdefn}
q_{ij} := \frac{\beta(|y_i-y_j|^2)}{\sum_{k \neq \ell}  \beta(|y_k-y_\ell|^2)}, \quad \textrm{for } i\neq j.
\end{equation}
In the case of SNE, we take $\beta(x) = e^{-x}$, which corresponds to a Gaussian distribution.  For t-SNE, we take
$$\beta(x) = \frac{1}{1+x},$$
which corresponds to  a Student's t-distribution with one degree of freedom (a Cauchy distribution), from which t-SNE gets its name.  

For step (3), we take our cost function to be the relative entropy (or Kullback-Leibler divergence),
\begin{equation} \label{C}
\mathcal{C}(Y) := \sum_{\substack{i \neq j}} p_{ij} \log \frac{p_{ij}}{q_{ij}},
\end{equation}
for points $Y= ( y_1, \ldots, y_n )$ with $y_i \in \mathbb{R}^s$.  The relative entropy is a measure of how the $p_{ij}$ differ from the $q_{ij}$; it is 
nonnegative and vanishes if and only if the probability distributions agree.
 The goal in step (3) is to minimize $\mathcal{C}(Y)$ over all $Y$.  The standard algorithms start with random initial data $Y_0$ in $\mathbb{R}^s$ and carry out gradient descent  for a finite number of steps, adding a ``random jitter'' at each step.  There are also  variants such as ``early exaggeration''.  We refer the reader to \cite{HR, vH} for more details.

The  development of dimension reduction methods such as the SNE and t-SNE has been driven largely by empirical approaches (see \cite{WHRS} for example), with comparatively little focus on rigorous mathematical foundations.  We describe now a few of the recent theoretical papers on the subject, all of which treat  t-SNE.
Under specific conditions, it has been shown that t-SNE separates clusters in a suitable sense by Shaham and Steinerberger \cite{SS}, Arora, Hu and Kothari \cite{AHK} and Linderman and Steinerberger \cite{LS}. Analysis of the early exaggeration phase, mentioned above, was studied by Linderman and Steinerberger \cite{LS} and further developed by Cai and Ma \cite{CM}.  Steinerberger and Zhang \cite{SZ} analyzed t-SNE on a single homogeneous cluster and in this setting recast the cost function as a classical calculus of variations problem.  Auffinger and Fletcher \cite{AF} considered the case when the initial data is given by $n$ independent identically distributed outputs and showed under certain conditions the convergence of the minima of the entropy to an equilibrium distribution.  Recently, Jeong and Wu \cite{JW} gave some conditions under which the solution of the gradient flow of the relative entropy remains bounded.

\subsection{The gradient flow of relative entropy}
The focus of this paper is the following mathematical problem, which can be described in a simple, self-contained way.  Let $( p_{ij} )_{i\neq j}$ be a given probability distribution on $\mathcal{P}_n$, with $n>s+1$ and $p_{ij}>0$ for all $i\neq j$.  Fix a smooth decreasing function $\beta: [0,\infty) \rightarrow (0,\infty)$ such that $\sup_{x\in[0,\infty)}|(\log \beta)'(x)|<\infty$.  Define probabilities $q_{ij}$ by (\ref{qijdefn}) and let $\mathcal{C}$ be the relative entropy (\ref{C}).  We wish to understand the minima of $\mathcal{C}$.  To do so, we consider the gradient flow of $\mathcal{C}$ with arbitrary initial data $Y_0 \subset \mathbb{R}^s$, which is given by
 \begin{equation} \label{gradflow}
 \begin{split}
\frac{dy_i}{dt}  = {} & 4\sum_{\substack{j \, | \,  j \neq i}}  (p_{ij} - q_{ij})(y_i-y_j) (\log \beta)'(|y_i - y_j|^2),
\end{split}
\end{equation}
for $\beta(x)$ as described in step 2.  See Section \ref{sectiongrad} for a proof of this formula.
There exists a solution $Y(t)=(y_1(t), \ldots, y_n(t))$ for all time $t \ge 0$ by Gronwall's inequality, since $(\log \beta)'$ is bounded by assumption.  We can now state our main problem.

\medskip

\noindent
{\bf Problem.} \ 
What is the behavior of solutions $Y(t)$ of this flow as $t \rightarrow \infty$ and how does this relate to the minima of $\mathcal{C}$?  Does $Y(t)$ converge (after rescaling, if necessary) to a set of points  $Y_{\infty} \subset \mathbb{R}^s$?  How does the limit depend on the initial data $Y_0$?

\medskip

\begin{remark} \, 
\begin{enumerate}
\item[(i)] These questions remain completely open, as far as we know.  In this paper, our main results are to obtain optimal estimates on the diameter of $Y(t)$ as $t \rightarrow \infty$ when $\beta(x)= (1+x)^{-1}$ (t-SNE) and  $\beta(x)=e^{-x}$ (SNE).  We are also interested in more general functions $\beta$.
\item[(ii)] As in \cite{JW}, we consider the gradient flow (\ref{gradflow}) rather than a gradient descent method with a finite number of steps, since we regard it as more natural from a mathematical point of view.  The behavior of $Y(t)$ as $t \rightarrow \infty$ should be closely related to the behavior of the SNE and t-SNE algorithms.
\item[(iii)] We regard the $p_{ij}$ as fixed at the outset, ignoring the construction  in step (1) above, although this is surely a very important aspect to be studied.  On the other hand, this means that our results are relevant no matter how the $p_{ij}$ are chosen.  In fact, our problem makes no reference to the high-dimensional space $\mathbb{R}^d$; rather it is a question about prescribing as close as possible a probability distribution on $\mathcal{P}_n$ arising from a configuration of $n$ points $y_1, \ldots, y_n$ in $\mathbb{R}^s$.  
\end{enumerate}
\end{remark}

\subsection{Results} We  explore the question of what happens to the diameter of the solution sets $Y(t)$  of the gradient flow of relative entropy, as $t \rightarrow \infty$.  We find optimal bounds when $\beta(x) = (1+x)^{-1}$ and $\beta(x)=e^{-x}$, the t-SNE and SNE cases, respectively. We observe very different behavior in these two cases.  

Write $Y(t)$ for the solution of the  flow starting at $Y_0=Y(0)$.   Our first theorem is as follows.

\pagebreak[3]
\begin{theorem} \label{mainthm0} \, The following diameter bounds hold.
\begin{enumerate}
\item[(i)] If $\beta(x) = (1+x)^{-1}$,   then for $t \ge 1$,
\begin{equation} \label{moptimal}
\emph{diam}\, Y(t) \le C  t^{\frac{1}{4}}.
\end{equation}
\item[(ii)]
If $\beta(x) = e^{-x}$,   then for $t \ge 0$,
\begin{equation} \label{eoptimal}
\emph{diam}\, Y(t) \le C.
\end{equation}
\end{enumerate}
\end{theorem}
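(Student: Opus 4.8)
\emph{Common setup.} The plan is to reduce both statements to differential inequalities for the single scalar $Q(t):=\sum_{i\neq j}|y_i-y_j|^2$, which is equivalent to the squared diameter: $(\operatorname{diam}Y)^2\le Q\le n(n-1)(\operatorname{diam}Y)^2$. The flow \eqref{gradflow} preserves the centroid $\tfrac1n\sum_iy_i$ --- the $(i,j)$ and $(j,i)$ contributions to $\sum_i\dot y_i$ cancel, since $p_{ij}=p_{ji}$, $q_{ij}=q_{ji}$ and $(\log\beta)'(|y_i-y_j|^2)$ is symmetric --- so we may assume $\sum_iy_i=0$, and then $Q=2n\sum_i|y_i|^2$. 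Writing $r_{ij}=|y_i-y_j|^2$, differentiating $Q$, inserting \eqref{gradflow}, and symmetrizing in $i\leftrightarrow j$ gives
\[
\dot Q=8n\sum_{i\neq j}(p_{ij}-q_{ij})\,(\log\beta)'(r_{ij})\,r_{ij}.
\]
I would combine this with two facts: \eqref{gradflow} is the negative gradient flow of $\mathcal C$, so $\mathcal C(Y(t))\le\mathcal C(Y_0)=:M$ for all $t$; and, from \eqref{qijdefn}--\eqref{C},
\[
\mathcal C(Y)=\sum_{i\neq j}p_{ij}\log p_{ij}-\sum_{i\neq j}p_{ij}\log\beta(r_{ij})+\log\!\Big(\sum_{k\neq\ell}\beta(r_{k\ell})\Big),
\]
so $\mathcal C\le M$ becomes a quantitative lower bound forcing the $r_{ij}$ not to be too small. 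The aim is to upgrade the first display, using $\mathcal C\le M$, to $\dot Q\le-cQ$ once $Q$ is large in case (ii), and $\dot Q\le C/Q$ once $Q$ is large in case (i). In case (ii) a comparison argument then gives $Q(t)\le\max\{Q(0),Q_*\}$ for all $t$, which is \eqref{eoptimal}; in case (i), $\tfrac{d}{dt}Q^2=2Q\dot Q\le 2C$ whenever $Q\ge Q_*$ and $Q<Q_*$ otherwise, so $Q(t)^2\le\max\{Q(0),Q_*\}^2+2Ct$, giving $\operatorname{diam}Y(t)\le\sqrt{Q(t)}\le C't^{1/4}$ for $t\ge1$, which is \eqref{moptimal}.

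\emph{Part (ii): $\beta(x)=e^{-x}$.} Here $(\log\beta)'\equiv-1$, so $\dot Q=-8n\big(\sum_{i\neq j}p_{ij}r_{ij}-\sum_{i\neq j}q_{ij}r_{ij}\big)$. Set $r_{\min}=\min_{i\neq j}r_{ij}$ and $p_{\min}=\min_{i\neq j}p_{ij}>0$. From $xe^{-x}\le e^{-1}$ and $\sum_{k\neq\ell}e^{-r_{k\ell}}\ge e^{-r_{\min}}$ one gets the unconditional bound $\sum_{i\neq j}q_{ij}r_{ij}\le r_{\min}+C_n$ with $C_n$ depending only on $n$. Using the formula for $\mathcal C$ (now $-\sum p_{ij}\log\beta(r_{ij})=\sum p_{ij}r_{ij}$ and $\log\sum_{k\neq\ell}e^{-r_{k\ell}}\ge -r_{\min}$), the bound $\mathcal C\le M$ forces $\sum_{i\neq j}p_{ij}r_{ij}\le r_{\min}+M'$ for a constant $M'$ depending on $M$ and $p$; since also $\sum_{i\neq j}p_{ij}r_{ij}\ge p_{\min}(\operatorname{diam}Y)^2$, this yields $r_{\min}\ge p_{\min}(\operatorname{diam}Y)^2-M'$, so all pairwise distances are comparable to the diameter when the latter is large. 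The hypothesis $n>s+1$ now enters through the elementary fact that $n\ge s+2$ points in $\mathbb R^s$ cannot be arbitrarily close to equilateral: by compactness there is $\varepsilon_0=\varepsilon_0(n,s)>0$ with $r_{\min}\le(1-\varepsilon_0)(\operatorname{diam}Y)^2$. Then
\[
\sum_{i\neq j}p_{ij}r_{ij}-\sum_{i\neq j}q_{ij}r_{ij}\ \ge\ p_{\min}\big(Q-n(n-1)r_{\min}\big)-C_n\ \ge\ p_{\min}\varepsilon_0(\operatorname{diam}Y)^2-C_n,
\]
which exceeds $\tfrac12 p_{\min}\varepsilon_0(\operatorname{diam}Y)^2\ge cQ$ once $Q$ is large, giving $\dot Q\le -8n\,c\,Q$ there.

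\emph{Part (i): $\beta(x)=(1+x)^{-1}$.} Now $(\log\beta)'(r)=-(1+r)^{-1}$, and since $\sum_{i\neq j}(p_{ij}-q_{ij})=0$ the identity becomes $\dot Q=8n\sum_{i\neq j}(p_{ij}-q_{ij})(1+r_{ij})^{-1}$. (That $\dot Q\lesssim 1/(\operatorname{diam}Y)^2$ is the right target is suggested by the self-similar ansatz $Y(t)=\lambda(t)Z$, for which one computes $\dot\lambda\sim\lambda^{-3}$, i.e. $\lambda\sim t^{1/4}$; this also indicates the exponent in \eqref{moptimal} is optimal.) As in part (ii), $\mathcal C\le M$ forces $1+r_{ij}\ge e^{-M'}(1+D^2)^{p_{\min}}$ for every $i\neq j$, with $D=\operatorname{diam}Y$; the decisive step is to improve the resulting bound on $\sum(p_{ij}-q_{ij})(1+r_{ij})^{-1}$ all the way to $C(n,M,p)/D^2$ once $D$ is large. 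I would do this by writing
\[
\sum_{i\neq j}(p_{ij}-q_{ij})\frac{1}{1+r_{ij}}=\sum_{i\neq j}(p_{ij}-q_{ij})\Big(\frac{1}{1+r_{ij}}-\frac{1}{1+D^2}\Big),
\]
and estimating the right side using, on the one hand, that $\mathcal C\le M$ makes $p_{ij}$ and $q_{ij}=\beta(r_{ij})/\sum_{k\ell}\beta(r_{k\ell})$ comparable up to $M$-dependent constants, and on the other hand that every factor $(1+r_{ij})^{-1}-(1+D^2)^{-1}$ is small, of order $(1+D^2)^{-p_{\min}}$, so that the cancellation built into the left-hand side can be exploited. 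Granting this and $D^2\le Q$, one obtains $\dot Q\le 8nC/Q$ for $Q$ large, and concludes as above.

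\emph{The main obstacle} is precisely the last improvement in part (i). Estimating $\sum p_{ij}(1+r_{ij})^{-1}$ and $\sum q_{ij}(1+r_{ij})^{-1}$ separately only gives $O\big((1+D^2)^{-p_{\min}}\big)$, which leads to the much weaker $\operatorname{diam}Y(t)=O\big(t^{1/(2(1+p_{\min}))}\big)$; squeezing this down to $O(1/D^2)$, hence to the sharp exponent $1/4$, requires a genuine quantitative comparison of the probability distributions $p$ and $q$ driven by the bound $\mathcal C\le M$, rather than any single crude inequality. Part (ii), by contrast, is comparatively routine once the ``no near-equilateral configuration'' fact --- the only place the assumption $n>s+1$ is used --- is in hand; one should also record the standard bootstrap remark that while $Q$ is below the threshold $Q_*$ where the improved inequalities take effect, $Q$ is bounded automatically, so the stated estimates on $Q(t)$ hold for all $t$.
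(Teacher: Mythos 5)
Your part (ii) is correct, and it takes a genuinely different and somewhat more elementary route than the paper. The paper uses the identity $x(\log\beta)'(x)=\log\beta(x)$ to rewrite $\frac{d}{dt}S$ as $\frac{8}{Z}\sum_{i<j}(p_{ij}Z-A_{ij})\log(A_{ij}/Z)$, isolates a single pair realizing the near-equilateral defect, and needs Proposition \ref{prop} to make $\log(A_{12}/Z)\le-c'S$ quantitative. You instead prove the unconditional bound $\sum_{i\neq j}q_{ij}r_{ij}\le r_{\min}+C_n$ (via $ue^{-u}\le e^{-1}$) and the lower bound $\sum_{i\neq j}p_{ij}r_{ij}\ge r_{\min}+p_{\min}\sum_{i\neq j}(r_{ij}-r_{\min})\ge r_{\min}+p_{\min}\varepsilon_0 D^2$, so that the ``no near-equilateral configuration'' fact alone gives $\dot Q\le-cQ+C$; you do not even need the monotonicity of $\mathcal C$ or the analogue of Proposition \ref{prop} for this part. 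Both arguments rest on the same geometric input from $n>s+1$.

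Part (i), however, has a genuine gap, which you yourself flag as ``the main obstacle'': you never establish $\dot Q\le C/Q$, and the route you sketch (rewriting the sum as $\sum(p_{ij}-q_{ij})\bigl((1+r_{ij})^{-1}-(1+D^2)^{-1}\bigr)$ and hoping to exploit cancellation between the $p$- and $q$-sums) is not the right mechanism. Your approach stalls because you extract from $\mathcal C\le M$ only the \emph{aggregate} inequality $\sum p_{ij}\log(1+r_{ij})\le\log(1+r_{\min})+M'$, which through $p_{\min}$ yields only $1+r_{\min}\ge c(1+D^2)^{p_{\min}}$ --- sublinear in $D^2$, hence the exponent $1/(2(1+p_{\min}))$ you correctly identify as too weak.

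The missing idea is Proposition \ref{prop} of the paper (due to Jeong--Wu in this case): use the entropy bound \emph{term by term}. Since each $q_{ij}<1$, every summand of $-\sum_{i\neq j}p_{ij}\log q_{ij}\le C$ is positive, so each summand is individually bounded by $C$, giving $q_{ij}\ge e^{-C/p_{ij}}>0$ for \emph{every} pair. Then $\beta(r_{ij})\ge c\,Z\ge c\,\beta(r_{k\ell})$ for all pairs, i.e.\ for $\beta(x)=(1+x)^{-1}$ all the quantities $1+r_{ij}$ are mutually comparable up to uniform constants, whence $r_{\min}\ge cD^2-C$. With this \emph{linear} lower bound no cancellation is needed at all: since $\sum_{i\neq j}(p_{ij}-q_{ij})=0$ one has
\begin{equation*}
\dot Q \;=\; 8n\sum_{i\neq j}(p_{ij}-q_{ij})\,\frac{1}{1+r_{ij}} \;\le\; 8n\sum_{i\neq j}\frac{p_{ij}}{1+r_{ij}} \;\le\; \frac{8n}{1+r_{\min}} \;\le\; \frac{C}{Q},
\end{equation*}
simply by discarding the nonpositive term $-\sum q_{ij}(1+r_{ij})^{-1}$. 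So the sharp exponent $1/4$ does not require any quantitative comparison of the distributions $p$ and $q$ beyond the pointwise positivity of each $q_{ij}$ forced by the decrease of $\mathcal C$.
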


Here and in the sequel, we use $C, C', c, c'$ etc to denote ``uniform'' positive constants, which means they are independent of $t$, but may depend on $n$, $s$, $Y_0$, $p_{ij}$ and the choice of function $\beta$.  These constants may differ from line to line.

The bounds of Theorem \ref{mainthm0} are optimal  in terms of $t$ in the following sense.

\begin{theorem} \label{mainthm2} \,
\begin{enumerate}
\item[(i)] Assume $\beta(x) = (1+x)^{-1}$.  We can find a probability distribution $(p_{ij})$  on some $\mathcal{P}_n$, initial data $Y_0 \subset \mathbb{R}^s$ and a positive constant $c$ such that $\emph{diam}\,  Y(t) \ge  c\, t^{\frac{1}{4}}$ for $t\ge 0$.
\item[(ii)] Assume $\beta(x) = e^{-x}$.  We can find a probability distribution $(p_{ij})$ on some $\mathcal{P}_n$, initial data $Y_0 \subset \mathbb{R}^s$ and a positive constant $c$ such that $\emph{diam}\,  Y(t) \ge  c$ for $t\ge 0$.
\end{enumerate}
\end{theorem}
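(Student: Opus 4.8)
\noindent We prove each lower bound by exhibiting an explicit configuration for which the system \eqref{gradflow} reduces to a single scalar ODE. For part (ii) the idea is to use our freedom to prescribe the $p_{ij}$: fix any $n=s+2$ points $Y_0=(y_1,\dots,y_n)$ in $\mathbb{R}^s$ that are not all equal, and simply \emph{define} $p_{ij}:=e^{-|y_i-y_j|^2}\big/\sum_{k\neq\ell}e^{-|y_k-y_\ell|^2}$. This is a positive symmetric probability distribution on $\mathcal{P}_n$ satisfying $q_{ij}(Y_0)=p_{ij}$, so $\mathcal{C}(Y_0)=\sum_{i\neq j}p_{ij}\log 1=0$ by \eqref{C}. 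Since $\mathcal{C}\ge 0$ everywhere, $Y_0$ minimizes $\mathcal{C}$ and hence the right-hand side of \eqref{gradflow} vanishes there; by uniqueness of solutions the flow is stationary, $Y(t)\equiv Y_0$, so $\operatorname{diam} Y(t)=\operatorname{diam} Y_0=:c>0$ for all $t\ge 0$.

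For part (i), where $\beta(x)=(1+x)^{-1}$ and so $(\log\beta)'(x)=-\beta(x)$, we take $n=s+2$ points: the vertices $a\,v_1,\dots,a\,v_{s+1}$ of a regular $s$-simplex inscribed in the sphere of radius $a>0$ about the origin (thus $v_i\cdot v_j=-\frac1s$ and $d^2:=|v_i-v_j|^2=\frac{2(s+1)}{s}$) together with one point at the origin; for $s=1$ this is just $-a,0,a$. Assign weight $p$ to every vertex--vertex pair and weight $r$ to every vertex--origin pair, so the normalization condition is $(s+1)(sp+2r)=1$. The system \eqref{gradflow} is equivariant under the symmetry group $S_{s+1}$ of this configuration (here using $p_{\sigma(i)\sigma(j)}=p_{ij}$), whose fixed-point locus is precisely this one-parameter family; hence the flow stays in the family, with the origin point fixed, and using $\sum_i v_i=0$ the scale $a=a(t)$ obeys
\begin{equation*}
\dot a=-4a\,F(a),\qquad F(a):=(s+1)\bigl(p-q_{vv}(a)\bigr)\beta(a^2d^2)+\bigl(r-q_{vc}(a)\bigr)\beta(a^2),
\end{equation*}
where $q_{vv}(a)$ and $q_{vc}(a)$ are the common values of the $q_{ij}$ on vertex--vertex and vertex--origin pairs.

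The heart of the argument is the behaviour of $F(a)$ as $a\to\infty$. Since $\beta(a^2d^2)$, $\beta(a^2)$, the normalizing sum $Z$, and hence $q_{vv},q_{vc}$ are analytic in $u:=a^{-2}$ near $u=0$, with $Z=\frac{(s+2)^2}{2}u+O(u^2)$ having nonzero leading coefficient, $F$ has an expansion $F(a)=c_1a^{-2}+\kappa\,a^{-4}+O(a^{-6})$. A short computation with the limiting values $\bar q_{vv}:=\lim q_{vv}=\frac{s}{(s+1)(s+2)^2}$ and $\bar q_{vc}:=\lim q_{vc}=\frac{2}{(s+2)^2}$ gives $c_1=\frac s2(p-\bar q_{vv})+(r-\bar q_{vc})$, and because $\frac s2\bar q_{vv}+\bar q_{vc}=\frac1{2(s+1)}=\frac s2p+r$ on the normalization surface, $c_1\equiv 0$: this cancellation is exactly why the diameter grows like $t^{1/4}$ rather than linearly. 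Carrying the expansion one order further shows $\kappa=\kappa(p)$ is affine in $p$ with positive slope $\frac{s(s+2)}{4(s+1)}$ and with $\kappa(\bar q_{vv})=0$ (a direct check; conceptually, when $(p_{ij})=(\bar q_{ij})$ the scaled configuration drives $\mathcal{C}$ down to its infimum $0$, and since relative entropy has vanishing first variation at coincidence, $\mathcal{C}$ is then $O(a^{-4})$, which forces $F=O(a^{-6})$). Hence $\kappa(p)=\frac{s(s+2)}{4(s+1)}(p-\bar q_{vv})$; for $s=1$ this reads $\kappa(p)=\frac38p-\frac1{48}$.

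To finish part (i), fix any $p\in(0,\bar q_{vv})$; since $\bar q_{vv}<\frac1{s(s+1)}$, the corresponding $r=\frac1{2(s+1)}-\frac s2p$ is positive, so $(p,r)$ is admissible and $\kappa<0$. Then $F(a)\le-\frac{|\kappa|}{2}a^{-4}$ for all $a\ge a_0$ once $a_0$ is large enough; taking $Y_0$ to be the above configuration at scale $a_0$, the ODE gives $\dot a\ge 2|\kappa|\,a^{-3}$ as long as $a\ge a_0$, hence $a$ is increasing for all time and $a(t)^4\ge a_0^4+8|\kappa|\,t$, so $a(t)\ge(8|\kappa|\,t)^{1/4}$ and $\operatorname{diam} Y(t)=d\,a(t)\ge c\,t^{1/4}$ for all $t\ge 0$. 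The main obstacle is the second-order expansion of $q_{vv},q_{vc}$ needed to pin down $\kappa(p)$ and to confirm it is not identically zero on the admissible $p$-range; everything else — equivariance yielding the scalar reduction, the normalization identity killing the leading term, and integrating $\dot a\sim c\,a^{-3}$ — is routine.
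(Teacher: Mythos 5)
Your proposal is correct, and it splits into one part that mirrors the paper and one that genuinely departs from it. For (i), your construction is the paper's example generalized to every $s$: at $s=1$ your ``simplex plus center'' is exactly the paper's three collinear points $X,0,-X$, your symmetry reduction to the scalar ODE $\dot a=-4aF(a)$ is the same mechanism the paper uses (reflection symmetry there, $S_{s+1}$-equivariance here), and your parameter condition $p\in(0,\bar q_{vv})$ with $\bar q_{vv}=\tfrac{1}{18}$ at $s=1$ translates precisely into the paper's window $a\in(2/9,1/4)$. The paper simply computes the bracket exactly for $n=3$, $s=1$ and reads off $\dot X\ge cX^{-3}$, whereas you organize the same computation as an expansion in $u=a^{-2}$, identify the cancellation of the $a^{-2}$ term via the normalization identity, and isolate $\kappa(p)$; the step you flag as the ``main obstacle'' does check out (the slope is $\partial_p F=(s+1)\beta(a^2d^2)-\tfrac{s}{2}\beta(a^2)=\tfrac{s(s+2)}{4(s+1)}a^{-4}+O(a^{-6})$, and a direct second-order expansion confirms $\kappa(\bar q_{vv})=0$, so $\kappa(p)=\tfrac{s(s+2)}{4(s+1)}(p-\bar q_{vv})$, matching the paper's exact $s=1$ formula), and your conceptual argument for $\kappa(\bar q_{vv})=0$ can be made rigorous since $F(a)=g'(a)/(4(s+1)a)$ for the restricted functional $g(a)=\mathcal{C}(Y(a))$, which is analytic in $u$ and $O(u^2)$ at coincidence. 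For (ii) you take a genuinely different and simpler route: rather than the paper's dynamic three-point example (where the bracket changes sign, keeping the diameter bounded away from zero while the points actually move), you exploit the freedom in choosing $(p_{ij})$ to make $Y_0$ a zero of relative entropy, hence a global minimizer with vanishing gradient, so the flow is stationary and the diameter is constant. This is a valid proof of the statement as written (which only asserts existence of some $(p_{ij})$, $Y_0$, $c$), works for any admissible $\beta$, and is essentially computation-free; what the paper's example buys in exchange is a nondegenerate illustration that the lower bound persists under genuinely nontrivial dynamics, which is closer to the phenomenon Theorem \ref{mainthm0}(ii) is meant to calibrate.
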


The examples in the proof of this result are very simple and have $n=3$ and $s=1$.  We also give an example (Example \ref{example2dim} below) with $n=4$ and $s=2$, to show that this is not a phenomenon unique to $s=1$.

We now consider the case of more general $\beta$. 
We make the following assumptions:
\begin{enumerate}
\item[(A1)] $\gamma(x)$ defined by $\gamma(x) :=  \frac{1}{\beta(x)}$ is a smooth convex function satisfying
$$\gamma(0)=1, \quad \gamma'(x) \ge 0, \quad \lim_{x\rightarrow \infty} \gamma(x)=\infty.$$
\item[(A2)] $\sup_{x\in [0,\infty)} (\log \gamma)'(x)<\infty.$ 
\end{enumerate}
Given (A1),  assumption (A2) is equivalent to the boundedness of $(\log \beta)'$.  The functions $\beta(x) = (1+x)^{-1}$ and $\beta(x) = e^{-x}$ satisfy (A1) and (A2).

Under these assumptions, we prove the following.

\begin{theorem} \label{thmconv} 
If $\emph{diam}\, Y(t_i) \rightarrow \infty$ for  $t_i \rightarrow \infty$ then the sequence
$$\frac{Y(t_i)}{\emph{diam} \,Y(t_i)}$$
subconverges to $n$ distinct points $Y_{\infty} = ( y_1^{\infty}, \ldots, y_n^{\infty} )$ in $\mathbb{R}^s$.
\end{theorem}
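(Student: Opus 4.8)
\noindent\emph{Proof proposal.}\ The plan is to show that a uniform upper bound on the relative entropy forces every $q_{ij}$ to stay bounded below by a positive constant, and that this — combined with the convexity in (A1) — forces all pairwise distances $|y_i-y_j|$ to be comparable to $\operatorname{diam} Y(t)$ once the diameter is large; distinctness of the limit points then follows by compactness.

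First I would record the reductions. Since \eqref{gradflow} is the negative gradient flow of $\mathcal{C}$, the function $t\mapsto\mathcal{C}(Y(t))$ is non-increasing, so $\mathcal{C}(Y(t))\le\mathcal{C}(Y_0)=:C_0$ for all $t\ge0$. The right-hand side of \eqref{gradflow} depends only on the differences $y_i-y_j$ and, after summing over $i$, is odd under $i\leftrightarrow j$, so the center of mass $\tfrac1n\sum_i y_i$ is constant along the flow; translating $Y_0$ I may assume it is the origin, whence $|y_k(t)|\le\operatorname{diam} Y(t)$ for every $k$ (since $y_k=\tfrac1n\sum_\ell(y_k-y_\ell)$). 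Thus the rescaled points $y_k(t)/\operatorname{diam} Y(t)$ all lie in the closed unit ball, and by compactness the theorem reduces to producing a constant $c>0$ such that
\begin{equation}\label{goal}
|y_k(t)-y_\ell(t)|\ \ge\ c\,\operatorname{diam} Y(t)\qquad\text{for all }k\neq\ell
\end{equation}
whenever $\operatorname{diam} Y(t)$ is large enough (in particular along the $t_i$ for $i$ large).

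Next I would extract a lower bound on the $q_{ij}$ from $\mathcal{C}(Y(t))\le C_0$. Fixing a pair $(i,j)$ and bounding the other terms of $\mathcal{C}(Y(t))=\sum_{k\neq\ell}p_{k\ell}\log(p_{k\ell}/q_{k\ell})$ by the log-sum inequality, then using $\log x\le x-1$, yields
\[
C_0\ \ge\ p_{ij}\log\frac{p_{ij}}{q_{ij}}+(1-p_{ij})\log\frac{1-p_{ij}}{1-q_{ij}}\ \ge\ p_{ij}\log\frac{p_{ij}}{q_{ij}}-p_{ij},
\]
so $q_{ij}(t)\ge\delta_{ij}>0$ for all $t$, with $\delta_{ij}$ depending only on $p_{ij}$ and $C_0$; put $\delta:=\min_{i\neq j}\delta_{ij}>0$. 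Since $q_{ij}=\beta(|y_i-y_j|^2)/S$ with the same normalizing sum $S=\sum_{k\neq\ell}\beta(|y_k-y_\ell|^2)$ for every pair, the ratios $\beta(|y_i-y_j|^2)/\beta(|y_k-y_\ell|^2)=q_{ij}/q_{k\ell}$ all lie in $[\delta,\delta^{-1}]$; comparing the closest pair with a pair realizing the diameter, and writing $\gamma=1/\beta$ and $m(t):=\min_{k\neq\ell}|y_k(t)-y_\ell(t)|$, this says
\[
\gamma\big(\operatorname{diam} Y(t)^2\big)\ \le\ \delta^{-1}\,\gamma\big(m(t)^2\big).
\]

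Finally I would invoke convexity to obtain \eqref{goal}. If $m(t)^2<\tfrac{\delta}{3}\operatorname{diam} Y(t)^2$, then monotonicity and convexity of $\gamma$ together with $\gamma(0)=1$ give $\gamma(m(t)^2)\le\tfrac{\delta}{3}\gamma(\operatorname{diam} Y(t)^2)+1$, and substituting into the previous inequality forces $\gamma(\operatorname{diam} Y(t)^2)\le\tfrac{3}{2\delta}$; since $\gamma(x)\to\infty$, this is impossible once $\operatorname{diam} Y(t)$ exceeds some $D_0$, so for all such $t$ we get $m(t)\ge\sqrt{\delta/3}\,\operatorname{diam} Y(t)$, i.e.\ \eqref{goal} with $c=\sqrt{\delta/3}$. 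Evaluating at $t=t_i$ for $i$ large and passing to a subsequence along which all $n$ rescaled points converge then produces a limit $Y_\infty=(y_1^\infty,\dots,y_n^\infty)$ with $|y_k^\infty-y_\ell^\infty|\ge c>0$ for $k\neq\ell$, as required. The step carrying the real content is this last one: the passage from ``$q_{ij}$ bounded below'' to ``all pairwise distances comparable to the diameter'' uses convexity of $\gamma$ essentially — it is precisely what controls $\gamma$ on $[0,\tfrac{\delta}{3}\operatorname{diam} Y(t)^2]$ by its value at $\operatorname{diam} Y(t)^2$ — and it is the reason assumption (A1) cannot be weakened; the entropy monotonicity and the log-sum estimate are routine, given the gradient-flow identity \eqref{gradflow} established in Section \ref{sectiongrad}.
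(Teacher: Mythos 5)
Your argument is correct and is essentially the paper's: the paper proves the theorem by citing Proposition \ref{prop}, whose proof uses exactly your two ingredients — monotonicity of $\mathcal{C}$ along the flow to bound the $q_{ij}$ from below, and the convexity of $\gamma$ (Lemma \ref{lemma}) to convert comparability of the $\beta$-values into comparability of all pairwise distances with the diameter (equivalently $\sqrt{S}$), after which compactness gives subconvergence to distinct points. Your version merely inlines that proposition, with cosmetic differences (log-sum inequality instead of positivity of each term $-p_{ij}\log q_{ij}$, and normalizing by $\operatorname{diam} Y$ rather than $\sqrt{S}$).
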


In the special case $\beta(x)=(1+x)^{-1}$, Theorem \ref{thmconv} is a consequence of  a result of Jeong and Wu \cite{JW}  (see Proposition \ref{prop} below).

It is easy to find examples where $\textrm{diam}\, Y(t)$ remains bounded in $t$, even when $\beta(x) = (1+x)^{-1}$.  The following gives rise to a large class of examples.

\begin{theorem} \label{thmdouble}
If $p_{1j} = p_{2j}$ for all $j>2$ and $y_1=y_2$ in $Y_0$ then there exists a uniform constant $C$ such that
$$\emph{diam}\, Y(t) \le C.$$
\end{theorem}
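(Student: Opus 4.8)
The plan is to show that the hypotheses force $y_1(t)=y_2(t)$ for all $t\ge 0$, and then to deduce the diameter bound from Theorem \ref{thmconv}: if two of the points stay permanently equal, the rescaled configuration can never converge to $n$ \emph{distinct} points, so the diameter cannot blow up.

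First I would prove that the ``diagonal'' $\{y_1=y_2\}$ is invariant under the flow (\ref{gradflow}). Introduce the reduced system of $n-1$ points $(z,y_3,\ldots,y_n)$ obtained by identifying $y_1$ and $y_2$ with a single point $z$; since $(\log\beta)'$ is bounded this reduced system has a solution for all $t\ge 0$ by the same Gronwall argument as for (\ref{gradflow}). I claim its diagonal embedding $(z,z,y_3,\ldots,y_n)$ solves (\ref{gradflow}) with the given initial data $Y_0$. For the indices $k>2$ the equations match automatically, because the right-hand side of (\ref{gradflow}) depends only on the point positions and $q_{k1}=\beta(|y_k-z|^2)/D=q_{k2}$ on the diagonal, where $D=\sum_{k\neq\ell}\beta(|y_k-y_\ell|^2)$. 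For $y_1$ and $y_2$: when $y_1=y_2$, the $j=2$ term in $\dot y_1$ and the $j=1$ term in $\dot y_2$ both vanish, since each carries the factor $y_1-y_2=0$; and for $j>2$ one has $q_{1j}=\beta(|z-y_j|^2)/D=q_{2j}$ together with $p_{1j}=p_{2j}$ by hypothesis, so the $j$-th terms of $\dot y_1$ and $\dot y_2$ coincide. Hence $\dot y_1=\dot y_2$ on the diagonal, and the diagonal embedding is a genuine solution of (\ref{gradflow}). Because $\beta$ is smooth and positive, the right-hand side of (\ref{gradflow}) is smooth, so solutions are unique, and therefore $Y(t)$ coincides with this diagonal embedding; in particular $y_1(t)=y_2(t)$ for all $t\ge 0$.

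Next I would argue by contradiction. If $\textrm{diam}\, Y(t)$ were unbounded, there would be a sequence $t_i\to\infty$ with $\textrm{diam}\, Y(t_i)\to\infty$, and by Theorem \ref{thmconv} a subsequence of $Y(t_i)/\textrm{diam}\, Y(t_i)$ would converge to $n$ distinct points $(y_1^\infty,\ldots,y_n^\infty)$ in $\mathbb{R}^s$. But $y_1(t_i)=y_2(t_i)$ for every $i$, so the first two components of the rescaled configuration are equal along the entire sequence, forcing $y_1^\infty=y_2^\infty$ and contradicting distinctness. Hence $\limsup_{t\to\infty}\textrm{diam}\, Y(t)<\infty$; since $\textrm{diam}\, Y(t)$ is continuous and finite on $[0,\infty)$, this gives a uniform bound $\textrm{diam}\, Y(t)\le C$.

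The step I expect to be the main obstacle is the invariance of the diagonal: one must check that the reduced vector field is well defined and smooth (so that uniqueness of ODE solutions applies) and that its diagonal embedding really solves (\ref{gradflow}), which comes down to the bookkeeping of the $j\in\{1,2\}$ terms and the identity $q_{1j}=q_{2j}$ on the diagonal. Once $y_1(t)\equiv y_2(t)$ is established, the diameter bound follows immediately from Theorem \ref{thmconv}. I note that the argument needs only (A1)--(A2) (to make Theorem \ref{thmconv} available and to have solutions defined for all time), so it applies in particular to the t-SNE case $\beta(x)=(1+x)^{-1}$, where the diameter can otherwise grow like $t^{1/4}$.
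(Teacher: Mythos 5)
Your proof is correct, and it follows the same two-step skeleton as the paper's: first show that the diagonal $\{y_1=y_2\}$ is preserved by the flow, then rule out diameter blow-up using the fact that large configurations must have all pairwise distances comparable to their overall size. The implementations differ in both steps. For the invariance, the paper argues directly: it computes $\frac{d}{dt}|y_1-y_2|^2$, uses the Mean Value Theorem together with $p_{1j}=p_{2j}$ to bound this by $C|y_1-y_2|^2$ on each compact time interval, and concludes $|y_1-y_2|^2\equiv 0$ by a Gronwall argument; you instead exhibit the diagonally embedded curve as a solution of (\ref{gradflow}) and invoke smoothness of the vector field plus uniqueness of ODE solutions. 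Your route is valid --- the tangency check $\dot y_1=\dot y_2$ on the diagonal is exactly where $p_{1j}=p_{2j}$ and $q_{1j}=q_{2j}$ enter --- and the reduced-system scaffolding can even be dispensed with: the vector field is equivariant under swapping the labels $1$ and $2$, so the swapped curve is a second solution with the same initial data, and uniqueness gives $y_1\equiv y_2$ directly. For the second step, the paper concludes immediately from Proposition \ref{prop} (if $S$ were large then $|y_1-y_2|^2\ge cS>0$, contradicting $y_1=y_2$), whereas you detour through Theorem \ref{thmconv}; since Theorem \ref{thmconv} is itself an immediate corollary of Proposition \ref{prop}, this is logically fine but slightly less direct, and it needs the (easy) extraction of a sequence $t_i\to\infty$ along which the diameter blows up, which you handle correctly. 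Both arguments use only (A1)--(A2), so the scope is the same.
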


In particular, if $\textrm{diam} \, Y(t)$ is unbounded, one can add an extra point to this data (``doubling up'' one of the existing points)  and the diameter will remain bounded. 

We also give an example (see Example \ref{examplecollapse} below) where $\textrm{diam}\, Y(t) \rightarrow 0$ as $t \rightarrow \infty$.

\subsection{SNE versus t-SNE} Our results show a marked difference in the behavior of the flow $Y(t)$ in the SNE versus t-SNE cases.  Only  t-SNE exhibits divergence of points at infinity, and Theorem \ref{thmconv} shows that if this occurs one obtains distinct points in the rescaled limit.  We believe this is related to the so-called ``crowding problem'' discussed by van der Maaten and Hinton  in \cite[Section 3.2]{vH}.    They note that in SNE, the points $y_i$ tend to be pushed towards the center, preventing gaps from occurring between clusters.  They give heuristic reasons why the heavier tails of the probability distribution in t-SNE may compensate for this effect, pushing points apart.  Indeed, this motivated their construction of the t-SNE algorithm.

\medskip

The outline of the paper is as follows.  In Section \ref{sectiongrad} we compute the formula (\ref{gradflow}) for the gradient flow.  Section \ref{sectionproofs} is the heart of the paper, in which we prove the main results, Theorems \ref{mainthm0} to \ref{thmdouble}.  In Section \ref{sectionexamples}, we give two further examples.  Finally in Section \ref{questions}, we pose some further questions for future study.

\medskip \noindent
{\bf Acknowledgements.} \ 
The author is very grateful to Antonio Auffinger for spurring his interest in the mathematical study of t-SNE, and for some helpful comments.  The author also thanks the referees for suggesting some improvements to the exposition.

\section{The gradient flow of relative entropy} \label{sectiongrad}

In this section we prove the formula (\ref{gradflow}) for the gradient flow of the entropy $\mathcal{C}$. The  gradient of $\mathcal{C}$ was calculated for $\beta=e^{-x}$  and $\beta(x) = (1+x)^{-1}$ in \cite{HR, vH} (see also \cite[Chapter 16]{Gbook}) and its extension to general $\beta$ is straightforward.  However, we include the details here for the convenience of the reader\footnote{The author thanks one of the referees for pointing out a simplification of his original argument.}.  Recalling (\ref{qijdefn}), we write
\begin{equation} \label{firstZ}
q_{ij} = \frac{\beta(|y_i-y_j|^2)}{Z}, \quad \textrm{for }Z:= \sum_{i\neq j} \beta(|y_j - y_j|^2).
\end{equation}
We have
\[
\begin{split}
\nabla_{y_m} \mathcal{C} = {} & - \sum_{i \neq j} p_{ij} \nabla_{y_m} \log q_{ij} \\
= {} & - \sum_{i \neq j} p_{ij} \left( \nabla_{y_m} \log \beta (|y_i-y_j|^2) - \nabla_{y_m} \log Z \right) \\
= {} & - \sum_{i \neq j} p_{ij} (\log \beta)'(|y_i-y_j|^2) \nabla_{y_m} |y_i-y_j|^2 + \frac{1}{Z} \nabla_{y_m} \sum_{i \neq j} \beta(|y_i-y_j|^2),
\end{split}
\]
where for the last line we used the fact that $\sum_{i\neq j} p_{ij}=1$.  
Observe that
$$\nabla_{y_m} |y_i-y_j|^2 = 2\delta_{mi} (y_m-y_j) + 2\delta_{mj} (y_m-y_i).$$
Hence
\[
\begin{split}
\nabla_{y_m} \mathcal{C} = {} & - 4 \sum_{j \, | \, j\neq m} p_{mj} (y_m-y_j) (\log \beta)'(|y_m-y_j|^2) + \frac{4}{Z} \sum_{j \, | \, j\neq m} (y_m-y_j) \beta'(|y_m-y_j|^2)  \\
= {} & - 4 \sum_{j \, | \, j \neq m} (p_{mj} - \frac{\beta(|y_m-y_j|^2)}{Z} ) (y_m-y_j)  (\log \beta)'(|y_m-y_j|^2) \\
= {} & -4 \sum_{j \, | \, j \neq m} (p_{mj} - q_{mj}) (y_m-y_j) (\log \beta)'(|y_m-y_j|^2),
\end{split}
\]
and then (\ref{gradflow}) follows.

\section{Proofs of the main results} \label{sectionproofs}

We begin with the most general setting, and then specialize later to the cases of $\beta=(1+x)^{-1}$ and $\beta(x)=e^{-x}$.
Assume that the function
$\gamma(x):= \frac{1}{\beta(x)}$ satisfies conditions (A1) and (A2) as in the introduction.

As observed in \cite{JW}, the center of mass of the points $y_1, \ldots, y_n$ does not change in $t$.  Indeed,
\[
\begin{split}
\frac{d}{dt} \sum_{i=1}^n y_i= {} &   4 \sum_{j \neq i}  (p_{ij} - q_{ij}) (y_i-y_j) (\log \beta)'(|y_i-y_j|^2) =0,
\end{split}
\]
since $(y_i-y_j)$ is anti-symmetric in $i,j$ while $(p_{ij}-q_{ij})(\log \beta)'(|y_i-y_j|^2)$ is symmetric in $i$, $j$.
 We may and do assume from now on that the center of mass of $Y$ is the origin.

Define $S: = \sum_{i=1}^n |y_i|^2$.  We note that since the center of mass of $Y$ is the origin,
\begin{equation} \label{diamS}
C^{-1} \textrm{diam}\, Y \le \sqrt{S} \le C \textrm{diam}\, Y.
\end{equation}

We will later make use of the following, which was already proved by Jeong and Wu \cite{JW} in the case $\beta=(1+x)^{-1}$.
\begin{proposition} \label{prop}
There exist uniform constants $C$ and $c>0$ such that if $S \ge C$ then
\begin{equation} \label{claimeqn}
|y_i-y_j|^2 \ge c S, \quad \textrm{for } i \neq j.
\end{equation}
\end{proposition}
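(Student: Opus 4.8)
The plan is to leverage the fact that the relative entropy $\mathcal{C}$ is nonincreasing along the flow in order to obtain a \emph{uniform positive lower bound} $q_{ij}(t)\ge c_1>0$, valid for all pairs and all times, and then feed this into the convexity assumption (A1). For the first step, I would note that, since the $p_{ij}$ are fixed, $\mathcal{C}(Y)=\mathrm{const}-\sum_{i\neq j}p_{ij}\log q_{ij}$, so monotonicity of $t\mapsto\mathcal{C}(Y(t))$ along the gradient flow (\ref{gradflow}) forces $\sum_{i\neq j}p_{ij}\log q_{ij}(t)\ge\sum_{i\neq j}p_{ij}\log q_{ij}(0)=:-K$ for a uniform constant $K>0$. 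Since every $q_{ij}\in(0,1)$, each summand $p_{ij}\log q_{ij}(t)$ is $\le 0$; isolating one pair $(a,b)$ then gives $p_{ab}\log q_{ab}(t)\ge -K$, hence $q_{ab}(t)\ge e^{-K/p_{ab}}\ge c_1$ for a uniform $c_1\in(0,1)$.

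Next I would convert this into a comparison of distances. Write $r^2:=\min_{i\neq j}|y_i-y_j|^2$ and $L:=\mathrm{diam}^2 Y$. Using $q_{ij}=\beta(|y_i-y_j|^2)/Z$ and $Z\ge\beta(r^2)$ (a single, and in fact the largest, term of the sum defining $Z$, as $\beta$ is decreasing), the bound $q_{ij}\ge c_1$ applied to the farthest pair gives $\beta(L)\ge c_1\beta(r^2)$, i.e.\ $\gamma(r^2)\ge c_1\gamma(L)$. On the other hand, convexity of $\gamma$ with $\gamma(0)=1$ and $0\le r^2\le L$ gives $\gamma(r^2)\le 1-r^2/L+(r^2/L)\gamma(L)$, and combining the two inequalities yields
\[
\frac{r^2}{L}\;\ge\;\frac{c_1\gamma(L)-1}{\gamma(L)-1}\;=\;c_1-\frac{1-c_1}{\gamma(L)-1}.
\]
Since $\gamma(L)\to\infty$ as $L\to\infty$ by (A1) and $c_1<1$, the right-hand side tends to $c_1$, so there is a uniform constant $C$ such that $S\ge C$ (equivalently, by (\ref{diamS}), $L$ sufficiently large) forces $r^2/L\ge c_1/2$; then for every pair, $|y_i-y_j|^2\ge r^2\ge(c_1/2)L\ge cS$, using (\ref{diamS}) once more.

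The heart of the matter is the first step: it is the \emph{decay of the entropy}, rather than any direct pointwise-in-time analysis of the ODE system, that keeps the $q_{ij}$ bounded away from zero uniformly in time, and everything afterwards is elementary and uses (A1) essentially verbatim. The only things left to check are the harmless degenerate cases (for instance $\gamma(L)=1$ when $L$ is small), which do not affect the conclusion since it is claimed only for $S$ large.
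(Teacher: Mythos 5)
Your proof is correct and follows essentially the same strategy as the paper: monotonicity of $\mathcal{C}$ along the gradient flow gives a uniform positive lower bound on the $q_{ij}$, and convexity of $\gamma$ with $\gamma(0)=1$ and $\gamma\to\infty$ converts that into the distance comparison, with (\ref{diamS}) tying everything to $S$. The only (harmless) difference is organizational: the paper applies the $q$-bound to a pair that is far apart because the center of mass is at the origin and then invokes Lemma \ref{lemma}, whereas you apply it to the diameter pair and run the convexity inequality directly to bound the ratio $\min_{k\neq\ell}|y_k-y_\ell|^2/\operatorname{diam}^2 Y$ from below.
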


We first prove an elementary lemma, which uses assumption (A1).

\begin{lemma} \label{lemma}
Given $C\ge 1$, there exists $C_0$ depending only on $\beta$ and $C$ such that if  $z \ge C_0$ and
$\beta(w) \le C \beta (z),$
then $$w \ge \frac{z}{2C}.$$
\end{lemma}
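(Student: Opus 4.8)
The plan is to rephrase the hypothesis and conclusion entirely in terms of $\gamma(x) := 1/\beta(x)$, which by assumption (A1) is convex and nondecreasing with $\gamma(0) = 1$ and $\gamma(x) \to \infty$ as $x \to \infty$. Since $\beta > 0$, the hypothesis $\beta(w) \le C\beta(z)$ is equivalent to $\gamma(w) \ge \gamma(z)/C$, and the goal $w \ge z/(2C)$ will be obtained by comparing $\gamma(w)$ with $\gamma(z/(2C))$ and invoking monotonicity of $\gamma$. Note that assumption (A2) plays no role here.

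First I would estimate $\gamma(z/(2C))$ from above using convexity. Because $C \ge 1$, the number $t := 1/(2C)$ lies in $(0, 1/2]$, so $z/(2C) = (1-t)\cdot 0 + t\cdot z$ is a convex combination of $0$ and $z$, giving
$$\gamma\!\left(\frac{z}{2C}\right) \le (1-t)\gamma(0) + t\,\gamma(z) \le 1 + \frac{\gamma(z)}{2C}.$$
Next, since $\gamma(x) \to \infty$, I can choose $C_0$, depending only on $\beta$ (through the growth of $\gamma$) and on $C$, so that $z \ge C_0$ forces $\gamma(z) \ge 4C$. Then $1 \le \gamma(z)/(4C)$, and the previous bound yields
$$\gamma\!\left(\frac{z}{2C}\right) \le \frac{\gamma(z)}{4C} + \frac{\gamma(z)}{2C} = \frac{3\gamma(z)}{4C} < \frac{\gamma(z)}{C} \le \gamma(w),$$
the last inequality being the restated hypothesis. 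Finally, since $\gamma$ is nondecreasing, $\gamma(w) > \gamma(z/(2C))$ implies $w > z/(2C)$, which is slightly stronger than the claimed bound.

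There is no serious obstacle in this argument; the only point requiring a little care is to build enough slack into the choice of $C_0$ — I require $\gamma(z) \ge 4C$ rather than merely $\gamma(z) \ge 2C$ — so that the chain of inequalities bounding $\gamma(z/(2C))$ is \emph{strict}. This is what lets the conclusion survive the fact that (A1) only guarantees that $\gamma$ is nondecreasing, not strictly increasing.
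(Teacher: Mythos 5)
Your proof is correct and follows essentially the same route as the paper: bound $\gamma\left(\frac{z}{2C}\right)$ by convexity using the convex combination of $0$ and $z$, choose $C_0$ so that $\gamma(z)$ is large, and conclude by monotonicity of $\gamma$ (equivalently, of $\beta$). The only difference is cosmetic — you demand $\gamma(z)\ge 4C$ to get strictness, while the paper uses $\gamma(z)\ge 2C$ and gets strictness from $\tau\gamma(0)=\tau<1$.
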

\begin{proof}
Suppose $z \ge C_0$.  The convexity of $\gamma$ implies that for any $\tau \in [0,1]$,
$$\gamma((1-\tau)z) \le \tau \gamma(0) + (1-\tau)\gamma(z).$$
Choose $\tau=1-\frac{1}{2C}$ and choose $C_0$ large enough so that $\gamma(z) \ge 2C$.  Then we have
$$\gamma\left( \frac{z}{2C} \right) < 1+ \frac{1}{2C} \gamma(z) \le \frac{1}{2C} \gamma(z)+ \frac{1}{2C} \gamma(z) = \frac{1}{C} \gamma(z).$$
Hence,
$$\beta\left( \frac{z}{2C} \right) > C \beta(z) \ge \beta(w),$$
where the second inequality follows from the assumption.  Since $\beta$ is decreasing, we have $w \ge \frac{z}{2C}$. 
\end{proof}

We now prove the proposition.

\begin{proof}[Proof of Proposition \ref{prop}] It must be true that for \emph{some} $i\neq j$ we have $|y_i-y_j|^2 \ge cS$ since the center of mass of the points is the origin.  Without loss of generality,  assume that $|y_1-y_2|^2 \ge cS$.  To prove (\ref{claimeqn}) holds for all $i\neq j$ we argue as follows.  Since the gradient flow decreases the functional
$$\mathcal{C}(Y) = \sum_{i\neq j} p_{ij} \log \frac{p_{ij}}{q_{ij}},$$
we have $\mathcal{C}(Y(t)) \le \mathcal{C}(Y_0)$ and hence,
$$-\sum_{i\neq j} p_{ij} \log q_{ij} \le -\sum_{i \neq j} p_{ij} \log p_{ij} + \mathcal{C}(Y_0) \le C,$$
for a uniform constant $C$.   Since each term $-p_{ij} \log q_{ij}$ in the sum is positive, we have for $i\neq j$, 
$$-p_{ij} \log q_{ij} \le C.$$
This implies in particular that $$\log q_{12} \ge -C/p_{12} =: -C'$$ for a uniform $C'$.  Here we recall our assumption that each $p_{ij}$ (for $i\neq j$) is strictly positive and our uniform constants may depend on the $p_{ij}$.  Exponentiating gives
\begin{equation}
c' \le q_{12} = \frac{\beta(|y_1-y_2|^2)}{\sum_{i\neq j} \beta (|y_i-y_j|^2)},
\end{equation}
for a uniform  positive $c'$.   

For each $i\neq j$,  we have
$$\beta (|y_i-y_j|^2) \le C \beta (|y_1-y_2|^2).$$
Applying the lemma we see that if $S$ is sufficiently large then
$$|y_i-y_j|^2 \ge \frac{|y_1-y_2|^2}{2C} \ge \frac{cS}{2C},$$
as required.
\end{proof}

We can now prove Theorem \ref{mainthm0}.

\begin{proof}[Proof of Theorem \ref{mainthm0}] 
Compute
\[
\begin{split}
 \frac{d}{dt} \sum_{i=1}^n |y_i|^2 = {} & 8 \sum_{i \neq j} (p_{ij} - q_{ij}) y_i \cdot (y_i-y_j) (\log \beta)' (|y_i-y_j|^2) \\
 = {} & 8 \sum_{i < j} (p_{ij} - q_{ij}) y_i \cdot (y_i-y_j) (\log \beta)' (|y_i-y_j|^2) \\ {} & +8 \sum_{i  < j} (p_{ij} - q_{ij}) y_j \cdot (y_j-y_i) (\log \beta)' (|y_i-y_j|^2) \\
 = {} & 8 \sum_{i < j} (p_{ij} - q_{ij}) |y_i-y_j|^2 (\log \beta)'(|y_i-y_j|^2),
 \end{split}
 \]
 where $\sum_{i<j}$ denotes the double summation of $i$ and $j$ with $i<j$.
Now for $i\neq j$, write
$$A_{ij} = \beta(|y_i-y_j|^2),$$
and as in  (\ref{firstZ}) above,
$$q_{ij} = \frac{A_{ij}}{Z}, \quad Z:= 2\sum_{k< \ell} A_{k\ell}.$$
Then,
\begin{equation} \label{ddtS}
\begin{split}
\frac{d}{dt} S = {} &  \frac{8}{Z} \sum_{i<j} \left( p_{ij}Z  - A_{ij} \right)  |y_i-y_j|^2 (\log \beta)'(|y_i-y_j|^2).
 \end{split}
\end{equation}
In case (i),  we compute
$$x(\log \beta)'(x) = \beta(x)-1,$$
and hence
\[
\begin{split}
\frac{d}{dt} S = {} &  \frac{8}{Z} \sum_{i<j} \left( p_{ij}Z - A_{ij} \right) (A_{ij}-1) \\
= {} & \frac{8}{Z} \sum_{i<j} \left( p_{ij}Z - A_{ij} \right) A_{ij},
\end{split}
\]
since
\begin{equation} \label{equals0}
\sum_{i<j} (p_{ij}Z - A_{ij} )= \frac{1}{2}Z - \sum_{i<j} A_{ij}=0.
\end{equation}

Hence for $S$ large,
\[
\begin{split}
\frac{d}{dt} S \le {} &  \frac{C\sum_{i\neq j} A_{ij}^2}{\sum_{i \neq j} A_{ij}} 
\le   C' \sum_{i\neq j} A_{ij}  = C' \sum_{i\neq j} \frac{1}{1+ |y_i-y_j|^2} 
\le \frac{C}{S},
\end{split}
\]
where we used Proposition \ref{prop} for the last inequality.  Hence
$$\frac{d}{dt} S^2 \le C,$$
giving $S^{2} \le Ct$ and the bound (\ref{moptimal}) follows from (\ref{diamS}).

In case (ii), $$x(\log \beta)'(x) = -x = \log \beta(x),$$
and so from (\ref{ddtS}),
\begin{equation} \label{ddtSe}
\begin{split} 
\frac{d}{dt} S = {} & \frac{8}{Z} \sum_{i<j} \left( p_{ij} Z - A_{ij} \right) \log A_{ij} \\
= {} &  \frac{8}{Z} \sum_{i<j} \left( p_{ij} Z - A_{ij} \right) \log \frac{A_{ij}}{Z},
\end{split}
\end{equation}
using again (\ref{equals0}).

Next we claim that there is a universal constant $\eta=\eta(s)>0$ such that for any $n$ points $y_1, \ldots, y_n \in \mathbb{R}^s$ with $n>s+1$,
\begin{equation} \label{c}
\max |y_i-y_j|^2 \ge (1+\eta) \min_{k\neq \ell} |y_k - y_{\ell}|^2.
\end{equation}
Indeed, without loss of generality we may assume that  $\min_{k\neq \ell} |y_k - y_{\ell}|^2=1$ (if $\min_{k\neq \ell} |y_k - y_{\ell}|^2=0$ there is nothing to prove).  If the claim is false then we can find a sequence  $( y^{(\ell)}_1, \ldots, y^{(\ell)}_{n} )_{\ell=1}^{\infty}$  such that $1\le |y^{(\ell)}_i-y^{(\ell)}_j|^2 \le (1+\ell^{-1})$ for all $i,j$.  Letting $\ell \rightarrow \infty$ produces $n>s+1$ points in $\mathbb{R}^s$ which are all equidistant from each other.  This contradicts an elementary result that the equilateral dimension of Euclidean space $\mathbb{R}^s$ is $s+1$ (see \cite{G}, for example).

We will now show that there are uniform positive constants $c$ and $C$ such that for $S$ sufficiently large
\begin{equation} \label{goal}
\frac{d}{dt} S \le -cS +C.
\end{equation}
Given this we are done since it implies that $S$ decreases when it is too large, and hence $S$ must be bounded.

Working with $Y(t)=(y_1(t), \ldots, y_n(t))$ at a fixed $t$, we may assume without loss of generality by the claim (\ref{c}) above that 
$$|y_1-y_2|^2 \ge (1+\eta) \min_{k\neq \ell} |y_k - y_{\ell}|^2.$$
Then, since $S$ is large, using Proposition \ref{prop},
$$A_{12} = e^{-|y_1-y_2|^2} \le e^{-(1+\eta)\min |y_k-y_{\ell}|^2}\le e^{-c'S}e^{-\min|y_k-y_{\ell}|^2} < e^{-c'S} Z.$$
In particular, $A_{12}$ is very small compared to $Z$.
From (\ref{ddtSe}) we have
\[ 
\begin{split} 
\frac{d}{dt} S = {} & \frac{8}{Z} \left( p_{12} Z - A_{12} \right) \log \frac{A_{12}}{Z} + \frac{8}{Z} \sum_{\substack{i<j  \, | \, (i,j) \neq (1,2)}} \left( p_{ij} Z - A_{ij} \right) \log \frac{A_{ij}}{Z} \\
\le {} & -\frac{4c'}{Z} p_{12} Z S +  \frac{8}{Z} \sum_{\substack{ i<j \, | \, (i,j)\neq (1,2), \\ p_{ij}Z-A_{ij} <0}} \left( p_{ij} Z - A_{ij} \right) \log \frac{A_{ij}}{Z},
\end{split}
\]
since we may assume that $p_{12}Z - A_{12} \ge \frac{p_{12}}{2} Z$, and noting that $\log \frac{A_{ij}}{Z} \le 0$ so that we can discard terms in the sum with $p_{ij} Z - A_{ij} \ge 0$.  Then for $S$ large,
\[
\begin{split} 
\frac{d}{dt} S = {} & -cS + \frac{8}{Z} \sum_{\substack{ i<j \,  | \,  (i,j)\neq (1,2), \\ p_{ij}Z-A_{ij} <0}} \left( p_{ij} Z - A_{ij} \right) \log p_{ij} \\
\le {} & -c S+ C <0,
\end{split}
\]
proving (\ref{goal}) as required.
\end{proof}

\begin{remark}
It was pointed out to the author by Antonio Auffinger that if one is interested in the Langevin dynamics instead of the gradient flow, the same computation can be used to 
estimate the expectation of $S$.
\end{remark}

\begin{proof}[Proof of Theorem \ref{mainthm2}]
We provide an example with three points in $\mathbb{R}$.  Write $p_{12}=p_{23}=a$ so that $p_{13}=1/2(1-4a)$ with $a\in (0,1/4)$ to be determined.  Assume that $y_1(t)=X(t)$, $y_2(t)=0$ and $y_3(t) = -X(t)$ so that the points are symmetric about the origin, with $X(0)>0$.  Recalling (\ref{firstZ}), we have $Z = 4\beta(X^2)+2\beta(4X^2)$ and
\begin{equation} \label{ddtXs}
\begin{split}
\frac{d}{dt} X = {} & 4\left( a- \frac{\beta(X^2)}{4\beta(X^2)+2\beta(4X^2)}  \right) X (\log \beta)'(X^2)  \\ {} & +4\left( \frac{1}{2}(1-4a) - \frac{\beta(4X^2)}{4\beta(X^2) + 2\beta(4X^2)}   \right) 2X(\log \beta)'(4X^2) \\
= {} & \frac{4X}{Z} \left( (1-4a) \beta(X^2) - 2a\beta(4X^2)\right)  \frac{\gamma'(X^2)}{\gamma(X^2)} \\ 
{} & + \frac{8X}{Z} \left( 4a \beta(4X^2) -2(1-4a) \beta(X^2)  \right)  \frac{\gamma'(4X^2)}{\gamma(4X^2)} \\
= {} & \frac{4X}{Z} \frac{1}{(\gamma(X^2))^2(\gamma(4X^2))^2} \bigg[ \left( (1-4a) \gamma(4X^2) - 2a\gamma(X^2) \right) \\ {} & \cdot \left( \gamma'(X^2) \gamma(4X^2) - 4\gamma'(4X^2) \gamma(X^2) \right) \bigg].
\end{split}
\end{equation}
For (i) we have $\gamma(x)=1+x$ and the term in the square brackets is
\begin{equation} \label{brackets}
\begin{split}
[ \cdots ] = {} &  -3 \left(1-6a + (4(1-4a) - 2a) X^{2} \right).
\end{split}
\end{equation}
Now choose $a \in(2/9,1/4)$ so that $1-6a<0$ and $4(1-4a) - 2a<-c$ for a uniform $c>0$.  Then
$$[\cdots ] \ge cX^{2}.$$
It follows that for $X$ large,
$$\frac{d}{dt} X \ge \frac{cX X^{2}}{X^{-2} X^{4} X^{4}} = cX^{-3},$$
and hence $X^{4} \ge ct$.  Then for $t \ge 1$,
$$\textrm{diam}\, Y(t) = 2X(t) \ge ct^{\frac{1}{4}},$$
giving the example for (i).

For (ii), choose $a \in (1/6, 1/4)$.  Then the term in the square brackets in (\ref{ddtXs}) is
$$[\cdots ] = - 3e^{5X^2} ((1-4a)e^{4X^2} - 2ae^{X^2}),$$
which is negative for $X$ large and positive for $X$ small.  This implies that the $\textrm{diam}\, Y(t)$ is bounded from below away from zero, as required.
\end{proof}

For Theorem \ref{thmconv} we argue as follows.

\begin{proof}[Proof of Theorem \ref{thmconv}] The only nontrivial assertion is that points in $Y_{\infty}$ are distinct, but this follows immediately from Proposition \ref{prop}.\end{proof}

Finally, we prove Theorem \ref{thmdouble}.

\begin{proof}[Proof of Theorem \ref{thmdouble}]
We first show that $y_1(t)=y_2(t)$ for all $t$.  Compute
\[
\begin{split}
\frac{d}{dt} |y_1-y_2|^2  = {} & 8 (y_1-y_2)\cdot \bigg[ (p_{12}-q_{12})(y_1-y_2) (\log \beta)' (|y_1-y_2|^2) \\
{} & - (p_{12} - q_{12}) (y_2-y_1) (\log \beta)' (|y_1-y_2|^2) \\
{} & +  \sum_{j >2} (p_{1j} - q_{1j}) (y_1-y_j) (\log \beta)' (|y_1-y_j|^2) \\
{} & -  \sum_{j >2} (p_{2j} - q_{2j}) (y_2-y_j) (\log \beta)' (|y_2-y_j|^2) \bigg].
\end{split}
\]
Consider $Y(t)$ for $t \in [0,T]$ where $T>0$ is fixed.  In what follows, $C, C'$ will denote constants that may now depend also on $T$ and $Y(t)$ for $t \in [0,T]$.
By the Mean Value Theorem,
$$|(\log \beta)'(|y_1-y_j|^2) - (\log \beta)'(|y_2-y_j|^2)| \le C  \big| |y_1-y_j|^2 -  |y_2-y_j|^2 \big| \le C' |y_1-y_2|,$$
and similarly
$$|q_{1j} - q_{2j}| \le C |y_1-y_2|.$$
Hence, using the fact that $p_{1j}=p_{2j}$ for all $j>2$,
$$\frac{d}{dt} |y_1-y_2|^2 \le C |y_1-y_2|^2,$$
and thus $e^{-Ct}|y_1-y_2|^2$ is decreasing on $[0,T]$ and initially vanishes.  This implies that $|y_1-y_2|^2$ remains zero for $t \in [0,T]$.  Hence $y_1(t)=y_2(t)$ on $[0,T]$ and since $T$ was arbitrary this proves $y_1(t)=y_2(t)$ for all $t$.

The theorem then follows immediately from Proposition \ref{prop}.
\end{proof}

\begin{remark}
Note that in the example given in the proof of Theorem \ref{mainthm2} for $\beta(x) = (1+x)^{-1}$, if we instead chose the initial data with $y_1=y_3$ then it follows from Theorem \ref{thmdouble} that $\textrm{diam} \, Y(t)$ is bounded.  Hence for a given probability distribution $\{ p_{ij} \}$, whether $\textrm{diam} \, Y(t)$ tends to infinity or not may depend on the initial data $Y_0$.
\end{remark}

\section{Examples} \label{sectionexamples}

\begin{example} \label{examplecollapse}  This is an example of three points in $\mathbb{R}$ which collapse to the origin as $t$ tends to infinity.

 Take $\beta(x)=(1+x)^{-1}$ and take $s=1$ and $p_{12}=p_{13}=p_{23}=1/6$.  Assume that $y_1(t) = X(t), \ y_2(t)=0$ and $y_3(t) = - X(t)$ so that points are symmetric about the origin, and we take $X(0)=1$.  From (\ref{ddtXs}) and (\ref{brackets}) with $a=1/6$ we have
$$\frac{dX}{dt} = - \frac{12X}{Z} \frac{1}{(\gamma(X^2))^2(\gamma(4X^2))^2}  X^{2}.$$
Then we see that $0 \le X(t) \le 1$ and hence
$$\frac{dX}{dt} \le  -CX^3.$$
It follows that 
$$\frac{d}{dt} X^{-2}  \ge \frac{1}{C},$$
which gives 
$$\textrm{diam}\, Y(t) = 2X(t) \le Ct^{-\frac{1}{2}} \rightarrow 0 \quad \textrm{as } t\rightarrow \infty.$$

If $\beta(x) = e^{-x}$ then
$$\frac{dX}{dt} = - \frac{4X}{Z} \frac{1}{(\gamma(X^2))^2(\gamma(4X^2))^2} e^{6X^2} (e^{3X^2}-1).
$$
It follows that $0 \le X(t) \le 1$ and since $e^{3X^2}-1 \ge 3X^2$, we have
$$\frac{d X}{dt} \le -C X^3$$
and hence 
$$\textrm{diam}\, Y(t) = 2X(t) \le Ct^{-\frac{1}{2}} \rightarrow 0 \quad \textrm{as } t\rightarrow \infty.$$
\end{example}

\begin{example} \label{example2dim} This is an example of four points in $\mathbb{R}^2$.  First assume $\beta(x) = (1+x)^{-1}$.  We will see that the flow has similar long time behavior as in the case of 3 points in $\mathbb{R}$ described in the proof of Theorem \ref{mainthm2}.  Consider the four points $y_1, \ldots, y_4$ in $\mathbb{R}^2$ with coordinates
$$y_1 = (X,0), \ y_2 = (0,X), \ y_3 = (-X, 0), \ y_4= (0, -X),$$
for $X=X(t)$, and $X(0)>0$. Define
\[
\begin{split}
& p_{12}=p_{23}=p_{34}=p_{41}=a \\
& p_{13}=p_{24} = \frac{1}{4} (1-8a),
\end{split}
\]
for $a \in (0,1/8)$ to be determined.
We have $Z=  8\beta(2X^2) + 4\beta (4X^2)$. Compute
$$\frac{dX}{dt} =  4\sum_{j=2}^4 \left(p_{1j} - \frac{\beta(|y_1-y_j|^2)}{Z}  \right) (y_1-y_j)_1 (\log \beta)' (|y_1-y_j|^2),$$
where $(y_1-y_j)_1$ means the $x_1$ component of $y_1-y_j$.

Compute
\[
\begin{split}
\frac{dX}{dt} = {} & 8 \left(  \frac{\beta(2X^2)}{Z} - a \right)  X\frac{\gamma'(2X^2)}{\gamma(2X^2)} + 4\left( \frac{\beta(4X^2)}{Z} - \frac{1}{4} (1-8a) \right) 2X \frac{\gamma'(4X^2)}{\gamma(4X^2)} \\
= {} & \frac{8X}{Z} \frac{1}{(\gamma(2X^2))^2 (\gamma(4X^2))^2} \bigg[ \left( (1-8a)\gamma(4X^2) - 4a\gamma(2X^2) \right) \\ {} & \cdot \left( \gamma'(2X^2) \gamma(4X^2) - 2\gamma'(4X^2)\gamma(2X^2) \right) \bigg].
\end{split}
\]
Using now $\gamma(x)=1+x$, the term in the square brackets is
\[
\begin{split}
[ \cdots ] = {} &  - \left(1-12a + (4(1-8a)-8a) X^{2} \right).
\end{split}
\]
Choose $a \in(1/10,1/8)$ so that $1-12a<0$ and $4(1-8a)-8a<-c$ for a uniform $c>0$.  Then
$$[\cdots ] \ge cX^{2}$$
and
$$\textrm{diam}\, Y(t) = 2X(t) \ge ct^{\frac{1}{4}},$$
as in the proof of Theorem \ref{mainthm2}.

If $\beta(x) = e^{-x}$ then by choosing $a \in (1/12, 1/8)$ one can check that the diameter remains bounded from below away from zero.
\end{example}

\section{Further questions} \label{questions}

We are still at the early stages of understanding the behavior of solutions $Y(t)$ of the gradient flow of relative entropy.  Some follow-up questions include:

\begin{enumerate}
\item[(i)] If $\textrm{diam}\, Y(t) \rightarrow \infty$ as $t\rightarrow \infty$, is this true also when $Y_0$ is perturbed slightly?
\item[(ii)] Is the limit $Y_{\infty}$ in Theorem \ref{thmconv} independent of the choice of subsequence of times?  
\item[(iii)] In the case $\beta(x) = (1+x)^{-1}$, are there probability distributions $(p_{ij})$ such that $\textrm{diam}\, Y(t) \rightarrow \infty$ for generic initial data $Y_0$?
\item[(iv)] In the case $\beta(x)=e^{-x}$ what can we say about the limits $Y(t_i) \rightarrow Y_{\infty}$?  Under what assumptions are the elements of $Y_{\infty}$ distinct?
\item[(v)] If $Y(t)$ converges to a critical point as $t \rightarrow \infty$, what is the rate of convergence?  (This is of practical importance for the users of the t-SNE algorithm).
\end{enumerate}

Given a probability distribution $(p_{ij})$, we expect the space of all possible limits $Y_{\infty}$, with varying initial data $Y_0$,   to be quite complicated in general.

\end{document}